\title{Regionalized Optimization}
\author{Grégoire Sergeant-Perthuis
\institute{Univ. Artois, UR 2462,\\ Laboratoire de Mathématiques de Lens (LML),\\ F-62300 Lens, France\\
}
\email{gregoireserper@gmail.com}
}
  \theoremstyle{definition}
  \newtheorem{defn}{Definition}[section]
  \newtheorem*{defn*}{Definition}
  \theoremstyle{plain}
  \newtheorem{thm}{Theorem}[section]
  \newtheorem{prop}{Proposition}[section]
  \newtheorem*{prop*}{Proposition}
   \newtheorem*{cor*}{Corollary}
  \newtheorem*{theo*}{Theorem}
  \newtheorem*{thm*}{Theorem}
  \theoremstyle{remark}
  \newtheorem{rem}{Remark}[section]
  \newtheorem{nota}{Notation}[section]
\newcommand{\p}{\mathbb{P}}
\newcommand{\R}{\mathbb{R}}
\newcommand{\A}{\mathcal{A}}
 \newcommand\N{\mathbb{N}}
 \newcommand\cat[1]{\textbf{#1}}
\newcommand{\Pa}{\mathcal{P}}
\newcommand{\mes}{\cat{Mes}}
\newcommand{\Kern}{\cat{Kern}}
\newcommand{\Vect}{\cat{Vect}}
\newcommand{\dd}{\text{d}}
\newcommand{\im}{\operatorname{im}}
\newcommand{\colim}{\operatorname{colim}}
\newtheorem*{rep@theorem}{\rep@title}
\newcommand{\newreptheorem}[2]{%
\newenvironment{rep#1}[1]{%
 \def\rep@title{#2 \ref{##1}}%
 \begin{rep@theorem}}%
 {\end{rep@theorem}}}
 \theoremstyle{plain}
 \let\Ginclude@graphics\@org@Ginclude@graphics
\newcommand\imCMsym[4][\mathord]{%
  \DeclareFontFamily{U} {#2}{}
  \DeclareFontShape{U}{#2}{m}{n}{
    <-6> #25
    <6-7> #26
    <7-8> #27
    <8-9> #28
    <9-10> #29
    <10-12> #210
    <12-> #212}{}
  \DeclareSymbolFont{CM#2} {U} {#2}{m}{n}
  \DeclareMathSymbol{#4}{#1}{CM#2}{#3}
}
\newcommand\alsoimCMsym[4][\mathord]{\DeclareMathSymbol{#4}{#1}{CM#2}{#3}}
\begin{document}

\maketitle

\begin{abstract}%
We propose a theoretical framework for non redundant reconstruction of a global loss from a collection of local ones under constraints given by a functor; we call this loss the regionalized loss in honor to Yedidia, Freeman, Weiss' celebrated article `Constructing free-energy approximations and generalized belief propagation algorithms' where a first example of regionalized loss, for entropy and the marginal functor, is built. We show how one can associate to these regionalized losses message passing algorithms for finding their critical points. It is a natural mathematical framework for optimization problems where there are multiple points of views on a dataset and replaces message passing algorithms as canonical ways of finding the optima of these problems. We explain how Generalized Belief propagation algorithms fall into the framework we propose and propose novel message passing algorithms for noisy channel networks.

\end{abstract}

\underline{\textbf{Keywords:}} Optimization, Category Theory, Message Passing algorithms, Free energy, Belief Propagation, Variational inference, Noisy channel networks.%

\section{Introduction}

\subsection{Motivation}

Recent computational models of adaptive systems are based on the premise that these systems have an internal model of the state and dynamics of their environment that they infer through observations thanks to their sensors. They then use these beliefs in such a way to explore and exploit their environment based on preferences (active inference) \cite{FRISTON200670} \cite{ijms222111868}. It is common for these systems to have multiple sensors (photoreceptor, chemoreceptors...) and in particular these sensor are specifically sensitive to one `type' of information; multi-modal integration is the capacity of a system to synthesize information from diverse `sensory' information. It is believed that it allows for uncertainty reduction and improves robustness of the system's inference on the state of its environment. In particular, to account for multi-modal integration, computational models must take into account the fact that sensory evidence is a collection of `observations' of diverse sensors that sums up to a coherent view of their environment which can allow for non existing states or extended states. It has been previously remarked that sections of functors could be the good framework to describe data coming from multiple sensors that should be the result of different points of view of the same `thing' \cite{ROBINSON2017208}\cite{s20123418}. We propose to go one step further and ask how to solve control problems on the observations of each sensors taking into account that these problems must result in a control problem on the reconstructed coherent view of the environment. To do so we propose a general framework for reconstructing a `global' loss (on the coherent view of the environment), that we call Regionalized loss, that is the less redundant possible with respect to how `local' losses (on the observations of the sensors) `intersect' and exhibit a canonical message passing algorithms for finding the critical points of the Regionalized loss. More precisely, we consider data with a hierarchical structure given by a poset (a functor $F$ from the poset to the category of finite real vector spaces) for which one can define at each point of the hierarchy a loss function. We use the inclusion-exclusion formula for posets to build the Regionalized loss over the limit of the hierarchy ($\lim F$). We then characterize the critical points of the constrained optimization problem which consists in minimizing the Regionalized loss over $\lim F$. This characterization allows for a mapping of Lagrange multipliers to the space of constraints which we use to define the canonical message passing algorithm associate to the Regionalized loss.

\subsection{Related work}

The partition function of $N$ random variables $X_1\in E_1..X_N\in E_N$ with probability distribution $P\propto e^{-H}$ is $Z:=\sum_{x_1\in E_1..x_N\in E_N}e^{-H(x_1..x_N)}$; the number of terms in the sum increases exponentially with the number of random variables making its computation more and more complex. However one can compute approximations of the partition function using variational principles which relate the minimum of a variational free energy to $-\ln Z $; sometimes minimizing the free energy only gives approximations of the partition function. These methods fall in the field of (approximate) variational inference. The Belief propagation and Generalized Belief Propagation algorithms \cite{Yedidia} are (approximate) variational inference algorithms (Theorem 5 \cite{Yedidia}) that are related respectively to the Bethe Free energy and the region based approximation of free energy. The Bethe Free energy is a particular case of a region based approximation of free energy and so are the associated message passing algorithms. Interestingly they can also be seen as inference on data that is structured hierarchically: Bayesian networks in the first case, subsets of random variables in the second. In the simplest case (a graph without edges) it is simply Naive Bayes which is heavily used in active inference \cite{DaCosta2020}, it can also be used for multi-agent collaborations \cite{LEVCHUK201967}.

\subsubsection{Region based approximation of free energy.}

The Generalized Belief Propagation and its underlying variational free energy which is the region based approximation of free energy as introduced in \cite{Yedidia} is where our research for a unified framework for reconstructing global losses from local ones under functorial constraints started. The originality of the region based approximation of free energy resides on two points:

\begin{enumerate}
    \item firstly, the loss that approximates the relative entropy is a non-redundant
 global reconstruction of relative entropies of local probability distributions defined on elements of the hierarchy
    \item secondly, the loss is constrained on collections of probability distributions that are compatible under marginalization by restriction from collections of variables that rank higher in the hierarchy to subcollection of variables that rank lower.
\end{enumerate}

It is therefore natural for us to start by recalling constructions and results from \cite{Yedidia}, for which numerical interest is already established, so that that we can extend them to solve the much more general problem we pose.

The Generalized Belief Propagation is already an extension of the (loopy) Belief Propagation and region based approximations of free energy are extensions of the underlying Bethe free energies \cite{doi:10.1143/JPSJ.12.753} which are defined only on graphs of variables: particular cases of regions considered in the Generalized Belief Propagation. We will consider regions, $\A$, to be partially ordered sets (poset), i.e. sets with a binary relation denoted as $\leq$ that is
\begin{enumerate}
\item reflexive: for all $a\in \A$, $a\leq a$,
\item anti-symmetric: if $a\leq b$ and $b\leq a$ then $a=b$
\item transitive: if $c\leq b$ and $b\leq a$ then $c\leq b$
\end{enumerate}

In \cite{Yedidia} regions have a more restrictive definitions than the one we propose however they fall in our setting as they consider collections of subsets of all the variables. 

We decided to present only the Generalized Belief Propagation as there is too big of a gap between standard Belief Propagation on graphs and the result we will present in this article.

\begin{nota}
For any measurable space $E$ we will denote $\mathbb{P}(E)$ the measurable space of probability distributions defined on $E$. 
\end{nota}

When $E$ is a finite set, the entropy of a probability distribution, $p\in \p(E)$ over $E$ is defined to be, 

\begin{equation}
S(p)=-\sum_{x\in E} p(x)\ln p(x)
\end{equation}

For the rest of this section $E$ is a finite set; let $H:E\to \R$ be any random variable, where the notation $H$ stands for `Hamiltonian' in the reference to the statistical physics literature. Let $U\in \R$, let us recall that the MaxEnt principle \cite{Kesavan2009} is maximizing entropy over probability distributions for which the mean value of $H$ is fixed to be $U$. Restating it more formally, MaxtEnt is solving the following problem,

\begin{equation}
\sup_{\substack{p\in \mathbb{P}(E)\\ \mathbb{E}_p [H]=U}} -\sum_{x\in E} p(x)\ln p(x)
\end{equation}

The MaxEnt principle can be re-expressed, using Lagrange multipliers, as solving:

\begin{equation}\label{Maxent}
\inf_{p\in \p(E)} \mathbb{E}_{p}[\beta H] - S(p) 
\end{equation}

$\mathbb{E}_{p}[ H] - \frac{1}{\beta}S(p) $ is called the Gibbs free energy. The solution to this problem is the following celebrated expression for Gibbs measures

\begin{equation}
\forall x\in E \quad p^*(x) = \frac{e^{-\beta H(x)}}{\sum_{x\in E}e^{-\beta H(x)}}
\end{equation}

In the particular case of the MaxEnt principle presented above, the space on which the optimization of entropy is done is an affine subspace: it is defined by $\mathbb{E}_p[H]=U$ where $\mathbb{E}_p[H]$ is linear on $p$. More generally, one is interested in finding the optimum of entropy on subspaces of the probability space ($\mathbb{P}(E)$) that are meaningful for the kind of data one is interested under considerations \cite{Kesavan2009} \cite{DEMARTINO2018e00596}. Finding optimal probability distributions in a family of probability distributions that maximize entropy is known in Bayesian inference as variational inference. \\

Let us now consider a collection of random variables $(X_i\in E_i,i\in I)$ taking respectively values in finite sets $E_i$, indexed over a finite set $I$; the configuration space or universe in which the collection of random variables live is the product $E=\prod_{i\in I} E_i$ and we denote $p\in \mathbb{P}(E)$ the probability law of this collection. For any subset $a\subseteq I$ in the powerset of $I$, $\mathcal{P}(I)$, the restricted collection of random variables $X_a\in \prod_{i\in a} E_i$ has as probability distribution the marginal law of $X_a$ denoted $p_a$. We denote $\prod_{i\in a} E_i$ as $E_a$. The entropy can be rewritten using the inclusion-exclusion formula as,

\begin{equation}\label{rewriting-entropy-intro}
S(p)= \sum_{a\subseteq I}\sum_{b\subseteq a} (-1)^{\vert a\setminus b \vert}S_b(p_b)
\end{equation}

where $S_b(p_b)=-\sum_{x_b\in E_b} p_b(x_b)\ln p_b(x_b)$.

The previous equation expresses the `global' entropy $S(p)$ as an alternated sum of local ones $S(p_b),b\subseteq I$; the use of the inclusion-exclusion formula can be understood as a non redundant approximation of entropy built from the entropies of variables $X_b,b\subseteq I$,

\begin{equation}\label{approximation-powerset}
S_{GBP}(p_a,a\subseteq I):= \sum_{a\subseteq I}\sum_{b\subseteq a} (-1)^{\vert a\setminus b \vert}S_b(p_b)
\end{equation}

In this particular case the approximation is exact as $S(p)=S_{BP}(p_a,a\subseteq I)$. This is the particular case of the region based approximation of free energy following the definitions of \cite{Yedidia}, where the region is the powerset. In \cite{Yedidia} regions are collections $\A_I\subseteq \mathcal{P}(I)$ of subsets of $I$; any two subsets of a region can be ordered by the inclusion relation. The definition of the approximate entropy in Equation \ref{approximation-powerset} extends to other collections of subsets of $I$ as the inclusion-exclusion formulas exists more generally for any partially ordered sets; this construction is due to Rota in his celebrated article introducing partially ordered sets as foundations of combinatorics \cite{Rota}. He pointed out two linear operators defined on functions $f\in \bigoplus_{a\in \A}\R$ over a poset, $\A$: the sum over the poset, called the zeta function of the poset, for $a\in \A$,

\begin{equation}
\zeta(f)(a)=\underset{b\leq a}{\sum} f_b
\end{equation}

and its inverse (Proposition 2 \cite{Rota}), called the M\"obius inversion,

\begin{equation}
\mu(f)(a)=\underset{b\leq a}{\sum} \mu(a,b) f_b
\end{equation}

 The inclusion-exclusion formula of Equation  \ref{approximation-powerset} then extends to any region $\A_I\subseteq \mathcal{P}(I)$ (seen as a poset) as,

\begin{equation}\label{Approx-entropy}
S_{GBP}(p_a,a\in \A_I):= \sum_{a\in \A_I}\sum_{b\leq a} \mu(a,b) S_b(p_b)
\end{equation}
 
However this time, in general, $S_{GBP}(p_a,a\in \A_I)$ differs from $S(p)$.\\

The expression of $S_{GBP}$ makes sense for all $\prod_{a\in \A_I} \p(E_a)$, the product space of the probabilities over each subset in $\A_I$, however there are implicit relations between the marginal laws. Indeed for any $a,b\in \A_I$ such that $b\subseteq a$, marginalizing on $X_a$ and then on $X_b$ is the same than marginalizing directly on $X_b$. To make the previous statement precise we need to introduce some concepts and notations inspired by category theory (as we will detail in next section); the marginal on $X_a$ of $p$ is formally defined as the image measure of $p$ by the projection $\pi_a: E\to E_a$, denoted as ${\pi_{a}}_{*}:\p(E)\to \p(E_a)$. Similarly the projection $\pi^a_b: E_a \to E_b$, defined as $\pi^a_b(x_i,i\in a)= (x_i, i\in b)$, induces an application $\pi^a_b: \p(E_a)\to \p(E_b)$. The marginal of $X_a$ for $a\in \A_I$ is by definition ${\pi_{a}}_{*}(p)$ and by construction the following result holds: for any $b\in \A_I$ such that $b\subseteq a$,

\begin{equation}
{\pi^a_b}_{*}(p_a)= {\pi^a_b}_{*} {\pi_a}_{*}(p)= {\pi_b}_{*}(p)= p_b
\end{equation}

The underlying constrained subspace of $\prod_{a\in \A_I} \p(E_a)$, that we will denote, for now, as $C(\pi,\A_I)$ is explicitly defined as for any  $q=(q_a\in \p(E_a), a\in \A_I)$ ,

\begin{equation}\label{lim-marginal}
q\in C(\pi,\A_I)\iff  \forall a,b\in \A_I \text{ such that } b\subseteq a\quad {\pi^{a}_{b}}_{*}(p_a)= p_b
\end{equation}

\begin{defn}[Region based approximation of free energy]\label{region-based-optimization}
Let $I$ and $E=\prod_{i\in I} E_i$ be finite sets; for a region $\A_I\subseteq \mathcal{P}(I)$ and a collection of Hamiltonians $(H_a\in \R^{E_a} ,a\in \A_I)$, the region based approximation of free energy is the following loss, for $q=(q_a\in \p(E_a),a\in \A_I)$,

\begin{equation}
F_{\text{GBP}}(q):=\sum_{a\in \A} \sum_{b\leq a} \mu(a,b) \left(\mathbb{E}_{q_b}[H_b] - S_b(q_b)\right)
\end{equation}

The associated optimization problem is, 

\begin{equation}
\inf_{p\in C(\pi,\A_I)} F_{\text{GBP}}(p)
\end{equation}
\end{defn}



\subsubsection{Generalized Belief Propagation}

Generalized Belief Propagation are classes of algorithms that enable to find the critical points of the region based approximations of free energy (the optimization problem of Definition \ref{region-based-optimization}). A classical result states that fix points of this algorithm correspond to critical points of that free energy. Let us now recall the expression of this algorithm and the correspondence result.

As in the previous section $I$ is a finite collection of indices, $E$ a finite product of finite sets, $(H_a,a\in \A_I)$ is a collection of Hamiltonians and $\A_I\subseteq \mathcal{P}(I)$ is a collection of subsets of $I$. For two elements of $\A_I$, $a$ and $b$ such that $b\subseteq a$, two types of messages are considered: top-down messages $m_{a\to b}\in \R_{>0}^{E_b}$ and bottom-up messages $n_{b\to a}\in \R_{>0}^{E_a}$.  The update rule of the Generalized Belief Propagation is as follows, consider messages at times $t$, $(n_{b\to a}(t), m_{a\to b}(t) \vert b,a\in \A_I \text{ s. t. } b\subseteq a)$, they are related by the following relation,

\begin{equation}\label{GBP1}
\forall a,b\in \A_I, \text{s.t. }  b\subseteq a, \quad n_{b \to a}(t)= \prod_{\substack{c: b\subseteq  c\\ c\not \subseteq a}}m_{c\to b} (t)
\end{equation}

The multiplication of function $n_{b\to a}$ that have different domains is made possible because there is an embedding of $\R^{E_b}$ into $\R^{E_a}$ implicitly implied, as a set of cylindrical functions.  

One can define beliefs as,

\begin{equation}\label{GBP2}
\forall a \in \A_I, \quad b_a(t)= e^{-H_a}\prod_{\substack{b\in \A:\\ b\subseteq a}} n_{b\to a}(t)
\end{equation}

The beliefs are interpreted as probability distributions up to a multiplicative constant. The update rule is given by,

\begin{equation}\label{GBP3}
\forall a,b\in \A_I \text{ s.t. } b\subseteq a, \forall x_b\in E_b\quad m_{a\to b}(x_b,t+1)= m_{a\to b}(x_b,t)\frac{\sum_{y_a: \pi^a_b(y_a)=x_b} b_a(y_a,t)}{b_b(x_b,t)}
\end{equation}

The algorithm can be rewritten in a more condensed manner, updating only the top-down messages, for all $a,b\in \A_I$, such that $b\subseteq a$,

\begin{align}
 m_{a\to b}(x_b,t+1)&=  m_{a\to b}(x_b,t)\frac{\sum_{\substack{y_a: \pi^a_b(y_a)=x_b}} e^{-H_a(y_a)} \prod_{\substack{c\in \A:\\ c\subseteq a}}\prod_{\substack{d: c\subseteq  d\\ d\not \subseteq a}}m_{d\to c} (y_c,t)}{e^{-H_b(x_b)}\prod_{\substack{c\in \A:\\ c\subseteq b}}\prod_{\substack{d: c\subseteq  d\\ d\not \subseteq b}}m_{d\to c} (x_c,t)}
\end{align}

We will denote this update rule as $GBP$: $m(t+1)= GBP(m(t))$

\begin{thm}[Yedidia, Freeman, Weiss, Peltre]
Let $(m_{a\to b},a,b\in \A_I \text{s.t. } b\subseteq a)$ be a fix point of the Generalized Belief Propagation, i.e.

\begin{equation}
m= GBP(m)
\end{equation}

Let $(b_a,a\in \A_I)$ be the associated beliefs and let, for $a\in \A_I$, $p_a= b_a/ \sum_{x\in E_a} b_a(x)$ be the associated normalized beliefs. Then $(p_a,a\in \A_I)$ is a critical point of $F_{BP}$ under the constraint that $p\in C(\pi,\A_I)$.

\end{thm}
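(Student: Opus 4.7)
The plan is to recast the constrained optimization as a Lagrangian problem and identify the messages $m_{a\to b}$ with exponentiated Lagrange multipliers for the marginalization constraints. I would introduce multipliers $\gamma_a \in \R$ for the normalization $\sum_{x_a \in E_a} p_a(x_a) = 1$ and multipliers $\lambda_{a\to b}(x_b) \in \R$ for each consistency constraint $\sum_{y_a : \pi^a_b(y_a)=x_b} p_a(y_a) = p_b(x_b)$ indexed by $b \subseteq a$ in $\A_I$ and $x_b \in E_b$, and write the Lagrangian
\[
L(p,\gamma,\lambda) = F_{\text{GBP}}(p) - \sum_{a \in \A_I} \gamma_a \Big(\sum_{x_a} p_a(x_a) - 1\Big) - \sum_{\substack{a,b \in \A_I \\ b \subseteq a}} \sum_{x_b \in E_b} \lambda_{a\to b}(x_b) \Big(\sum_{y_a:\pi^a_b(y_a)=x_b} p_a(y_a) - p_b(x_b)\Big).
\]

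The first step is to compute the stationarity condition $\partial L/\partial p_a(x_a) = 0$. Using the definition of $F_{\text{GBP}}$ and expanding the Möbius sum, this yields, for each $a \in \A_I$ and $x_a \in E_a$, an expression of the form $c_a\bigl(H_a(x_a) + \ln p_a(x_a) + 1\bigr) = \gamma_a + \sum_{b \subseteq a} \lambda_{a\to b}(\pi^a_b(x_a)) - \sum_{c \supseteq a} \lambda_{c\to a}(x_a)$, with $c_a := \sum_{b \le a} \mu(a,b)$ a Möbius coefficient. Exponentiating and rearranging writes $p_a$ as $e^{-H_a}$ times a product of exponentials of $\lambda$'s indexed by the pairs comparable to $a$ in the Hasse diagram of $\A_I$.

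Now I would perform the change of variables $m_{a\to b} := \exp(\lambda_{a\to b})$ and $n_{b \to a} := \prod_{c : b \subseteq c,\, c \not\subseteq a} m_{c\to b}$, matching Equations \eqref{GBP1}--\eqref{GBP2}. A combinatorial identity on the Möbius function on $\A_I$ transforms the sum $\sum_{c \supseteq a} \lambda_{c \to a}$ into the bottom-up product structure of the $n_{b \to a}$, so that the stationarity condition becomes precisely the belief identity $p_a \propto e^{-H_a} \prod_{b \subseteq a} n_{b \to a}$. Conversely, given a fix point $m = GBP(m)$, setting $\lambda_{a\to b} := \ln m_{a\to b}$ and $p_a := b_a / \sum_{y_a} b_a(y_a)$ satisfies this stationarity equation.

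It remains to verify $p \in C(\pi, \A_I)$. At a fix point of the update \eqref{GBP3} the ratio $m_{a\to b}(t+1)/m_{a\to b}(t)$ equals $1$, which gives $\sum_{y_a:\pi^a_b(y_a)=x_b} b_a(y_a) = b_b(x_b)$ for every $b \subseteq a$; summing over $x_b$ shows $\sum_{y_a} b_a(y_a) = \sum_{x_b} b_b(x_b)$, so after dividing by these common totals the marginalization identity ${\pi^a_b}_*(p_a) = p_b$ holds. Together with the stationarity check, this makes $(p_a, a \in \A_I)$ a critical point of $F_{\text{GBP}}$ on $C(\pi, \A_I)$. The main obstacle is precisely the Möbius-theoretic identity that converts the additive $\sum_{c \supseteq a} \lambda_{c\to a}$ appearing in the Lagrangian into the combinatorial product defining $n_{b\to a}$; this is the place where the fact that the messages live on the covering relations of the poset, not on all comparable pairs, has to be reconciled with the first-order condition, and it is where all the nontrivial bookkeeping of the proof is concentrated.
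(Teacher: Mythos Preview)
The paper does not actually prove this theorem: its entire proof consists of the citations ``Theorem 5 \cite{Yedidia}, Theorem 5.15 \cite{Peltre}.'' Your Lagrangian sketch is precisely the approach of those references, so you are reconstructing what the paper defers to the literature. In that sense your proposal is on the right track and more informative than the paper's own treatment.

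Two small corrections are in order. First, your coefficient $c_a := \sum_{b \le a} \mu(a,b)$ has the order reversed: differentiating $F_{\text{GBP}}$ with respect to $p_a$ picks up the terms in the double sum where $a$ is the \emph{inner} index, so the correct weight is $c(a) = \sum_{b \ge a} \mu(b,a)$, which is exactly the paper's definition of $c(a)$ in Definition~\ref{Local-optimization-cost-function}. Second, your closing remark that ``the messages live on the covering relations of the poset, not on all comparable pairs'' does not apply here: in this paper's formulation (Equations~\eqref{GBP1}--\eqref{GBP3}) the messages $m_{a\to b}$ are indexed by \emph{all} comparable pairs $b\subseteq a$ in $\A_I$, not just covers. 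This actually makes your ``main obstacle'' easier, not harder: the identity you need is
\[
\zeta\,\dd\,\ln m\,(a) \;=\; \sum_{b\leq a}\;\sum_{\substack{c:\,b\leq c\\ c\nleq a}} \ln m_{c\to b},
\]
which the paper itself records in its final remark (and attributes to Peltre's Gauss formula, Proposition~2.3 of \cite{Peltre}). With messages on all pairs this is a direct computation with the zeta operator; no cover-relation combinatorics is required. The remaining subtlety you do not mention is the presence of the factor $c(a)$ in front of $\ln p_a$ in the stationarity equation: one must either assume $c(a)\neq 0$ and absorb it into the multipliers, or, as the paper does in Section~\ref{chapitre-8:local-optimization}, recast the first-order condition as $\mu_{F^*} d_x f \in \im \dd$, which handles the M\"obius weights intrinsically.
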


\begin{proof}
Theorem 5 \cite{Yedidia}, Theorem 5.15 \cite{Peltre}.

\end{proof}

\section{Proposed framework for optimization under functorial constraints: Regionalized Optimization}

The region based free energy approximations are particular to entropy and marginalization of probability distributions, which limits the possibilities with respect to where one can apply the previous constructions. However the important idea, or even principle, that can be retained from the the region based free energy approximations is to build a non redundant loss from local ones and replace global variables by local ones; it has a much broader scope of applications. However the region based free energy approximations fail to define a framework general enough so that these ideas can be applied to other losses over data that are not probability distributions. We propose to give a theoretical setting that accomplishes this motivation in a way general enough so that it can be applied to very different types of data and losses. We call it `Regionalized Optimization'. The contribution of this paper is to define this framework: the Regionalized loss, the associated messages passing algorithms and to prove that critical points of that loss correspond to fix points of these algorithms. In this document we keep the presentation the most general possible and do not want to explicitly restrict our construction to a certain type of dataset, we hope this work can be used as a blueprint for people who are willing to apply the previous principles to their data. Nevertheless in Appendix \ref{appendix:first} we give an example of how our constructions can lead to novel applications.\\

The constraint space of the previous section $C(\pi,\A_I)$ is a typical construction that appears in category theory: the limit of a certain functor. In what follows we will make use of the formalism of categories, functors and limits as it the necessary language to understand the objects we will be using and guides the intuition for the construction we propose, in particular for the message passing algorithms.

A poset is a particular case of a category where one defines an arrow, $b\to a$, between two elements $a,b\in \A$ when $b\leq a$. A functor $G$ from a poset $\A$ to the category of real vector spaces is a generalized function that sends elements of the poset $a\in \A$ to a vector space $G(a)$ and an arrow $b\to a$ to a linear application $G^b_a:G(b) \to G(a)$. The collection of vector spaces with arrows (morphisms) between two vector spaces that are linear transformations is called the category of ($\mathbb{R}-$) vector spaces and denoted as $\Vect$; if one only considers finite dimensional vector spaces, which will be our case, the associate (sub-)category is denoted $\Vect_f$. Given a functor $G$ from a poset $\A$ to $\Vect$, the collection of vectors $(v_a,a\in \A)$ that are compatible along the linear applications $G^b_a$ satisfy the following constraints: for all $a,b\in\A$ such that $b\leq a$, $G^b_a(v_b)= v_a$.

A cofunctor from a poset $\A$ to $\Vect$, is simply a functor from $\A^{op}$, the same poset with the opposite order, to the category of vector spaces. \\

In these terms, one remarks that $C(\pi, \A_I)$ is in fact $\lim \pi_*$. A dual construction is the colimit of a functor, denoted as $\colim$, which is a limit when the arrows of the target category are opposed; we will not go into the details of its definition but one can refer to \cite{MacL} for more on the subject. A simple example of a limit is the product of two sets $A\times B$ and of a colimit is the disjoint union $A\CMcoprod B$.

\subsection{Overview of results}

\subsubsection{Regionalized loss}

In this paper we propose,
\begin{enumerate}
    \item to build a global cost function, called Regionalized loss, from local cost functions over a set of local variables that are compatible in some way
    \item to define message passing algorithms which fix points are the critical points of the Regionalized loss, so that for finding the critical points of this loss one can iterate the message passing algorithms. 

\end{enumerate}

More precisely, let $F$ be a cofunctor from a finite poset $\A$ to $\Vect_f$, the category of finite ($\mathbb{R}-$) vector spaces, and let $(f_a:F(a)\to \R,a\in \A)$ be a collection of local cost functions, the Regionalized loss (with respect to $F$ and $(f_a,a\in \A)$) is:

\begin{equation}\label{chapitre-8:intro:local-optimization}
\begin{array}{ccccc}
f_R& : &\prod_{a\in \A} F(a) & \to & \R \\
& & (x_a,a\in \A) & \mapsto &\sum_{a\in \A} \sum_{b\leq a} \mu(a,b) f_b(x_b)\\
\end{array}
\end{equation}

We will motivate our choice in Section \ref{chapitre-8:section-1-1}. The associated `Regionalized' Optimization problem is,

\begin{equation}
\sup_{x\in \lim F} f_R(x)
\end{equation}

One can think of $f$ as the less redundant reconstruction of a global cost function from the local cost function $(f_a,a\in \A)$.\\

\subsubsection{Characterisation of critical points}

Let $G$ be a functor from $\A$ to $\Vect$; the dual functor, $G^*$, sends any element $a\in \A$, the set of linear applications (forms) from $G(a)$ to $\R$, denoted as $G(a)^*$,

\begin{equation}
G^*(a):= G(a)^*
\end{equation}

and to any two elements $a,b\in \A$ such that $b\leq a$, ${G^*}^{b}_{a}$, the dual map of $G^b_a$ usually denoted as ${G^b_a}^*$, which sends a linear form $l\in G(a)^*$ to $l\circ G^b_a\in G(b)^*$; to sum it up,

\begin{equation}
 {G^*}^{b}_{a}=  {G^b_a}^*
\end{equation}

In the following theorem, we show that, when $(f_a:F(a)\to \R,a\in \A)$ is a collection of differentiable applications, the critical points of the Regionalized loss are characterized `locally' up to `messages' on the (opposed) arrows of the poset.

\begin{repthm}{thm-local-optimization}
 
An element $x\in \lim F$ is a critical point of the Regionalized loss $f_R$ with respect to a cofunctor $F$ and a collection of differentiable applications $(f_a:F(a)\to \R,a\in \A)$  if and only if there is $(l_{a\to b}\in \bigoplus_{\substack{a,b:\\ b\leq a}} F(b)^{*})$ such that for any $a\in \A$,

\begin{equation}
d_xf_a =\sum_{b:b\leq a}{F^a_b}^{*}\left( \sum_{c:c\leq b} {F^b_c}^{*} l_{b\to c}- \sum_{c:c\geq b} l_{c\to b} \right)
\end{equation}

 \end{repthm}

\subsubsection{Message passing algorithms}

We will now explain how the message passing algorithms we propose for finding the critical points of the Regionalized losses are defined. To do so we assume that the local cost functions $f=(f_a,a\in \A)$ are such that there is a collection of functions $g=(g_a,a\in \A)$ that inverse the relation induced by differentiating these function, i.e. that for any $a\in \A$,

\begin{equation}\
\forall x,y\quad d_xf_a = y \iff x= g_a(y)
\end{equation}

We consider the Regionalized loss, $f_R$, with respect to $(f_a,a\in \A)$ and $F$; we define the following message passing algorithm; there are two messages,

\begin{enumerate}
\item $m(t)\in \bigoplus_{\substack{a,b:\\ b\leq a}} F(b)^{*}$
\item $n_t\in \bigoplus_{a\in \A} F(a)^*$
\end{enumerate}

For any $a,b\in \A$ such that $b\leq a$, we define the following update rule: 

\begin{align}
m_{a\to b}(t+1) &= m_{a\to b} (t) +F^a_bg_a(n_t(a))-g_b(n_t(b)) \\
n_t(a)&= \sum_{b:b\leq a} \sum_{c:c\leq b} {F^a_c}^{*}m_{b\rightarrow c}(t) -\sum_{b:b\leq a}\sum_{c:c\geq b} {F^a_b}^{*} m_{c\rightarrow b}(t)
\end{align}

We show that the fix points of this algorithm are critical points of $f_R$ over $\lim F$. \\

Let us now go into the details of the claims of this Section.


\subsection{Main contribution}\label{chapitre-8:section-1-1}

\subsubsection{Definition of M\"obius inversion for functors and cofunctors}
\begin{defn}[M\"obius inversion associated to a functor]
Let $\A$ be a finite poset, let $\mu$ be its M\"obius inversion. Let $G: \A\to \Vect$ be a functor from a finite poset to the category of ($\R$-) vector spaces; we introduce $\mu_{G}: \bigoplus_{a\in \A} G(a)\to \bigoplus_{a\in \A} G(a)$ to be such that for any $a\in \A$ and $v\in \bigoplus_{a\in \A} G(a)$,

\begin{equation}
\mu_{G}(v)(a):=\sum_{b\leq a} \mu(a,b) G^b_a(v_b)
\end{equation}

\end{defn}


\begin{prop}
Let $G: \A\to \Vect$ be a functor from a finite poset to the category of vector spaces, $\mu_{G}$ is invertible and its inverse, denoted $\zeta_{G}$, is defined as follows: for any $a\in \A$ and $v\in \bigoplus_{a\in \A} G(a)$,

\begin{equation}
\zeta_{G}(v)(a)=\sum_{b\leq a} G^b_a(v_b)
\end{equation}
\end{prop}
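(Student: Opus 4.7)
The plan is to verify directly that $\zeta_G \circ \mu_G = \mathrm{id}$ and $\mu_G \circ \zeta_G = \mathrm{id}$, using two ingredients: (i) the functoriality of $G$, which gives $G^b_a \circ G^c_b = G^c_a$ whenever $c \leq b \leq a$ (and $G^a_a = \mathrm{id}_{G(a)}$), and (ii) the defining relation of the Möbius function of $\A$, namely $\sum_{b:\, c \leq b \leq a} \mu(a,b) = \delta_{a,c} = \sum_{b:\, c \leq b \leq a} \mu(b,c)$, which holds because $\mu$ is by definition the inverse of $\zeta$ in the incidence algebra of $\A$ (as already recalled in the paper).

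Concretely, for $v \in \bigoplus_{a \in \A} G(a)$ and $a \in \A$, I would unfold the definitions:
\begin{equation*}
\zeta_G(\mu_G(v))(a) = \sum_{b \leq a} G^b_a\bigl(\mu_G(v)(b)\bigr) = \sum_{b \leq a} \sum_{c \leq b} \mu(b,c)\, G^b_a \circ G^c_b (v_c).
\end{equation*}
Applying functoriality to collapse $G^b_a \circ G^c_b = G^c_a$ and then swapping the order of summation (summing first over $c \leq a$ and then over $b$ with $c \leq b \leq a$) gives
\begin{equation*}
\zeta_G(\mu_G(v))(a) = \sum_{c \leq a} G^c_a(v_c) \sum_{b:\, c \leq b \leq a} \mu(b,c) = \sum_{c \leq a} \delta_{a,c}\, G^c_a(v_c) = v_a.
\end{equation*}

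The converse composition is symmetric: writing out $\mu_G(\zeta_G(v))(a)$, pulling $\mu(a,b)$ outside, using $G^b_a \circ G^c_b = G^c_a$, and regrouping yields $\sum_{c \leq a} G^c_a(v_c) \sum_{b:\,c \leq b \leq a} \mu(a,b) = v_a$ by the dual Möbius identity. Since this holds coordinate-wise for every $a \in \A$, we conclude $\mu_G \circ \zeta_G = \zeta_G \circ \mu_G = \mathrm{id}$, so $\mu_G$ is invertible with inverse $\zeta_G$.

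There is no real obstacle here; the statement is a categorified repackaging of Rota's classical Möbius inversion. The only point requiring a little care is the interchange of summations and the invocation of functoriality, which together are what allow the scalar Möbius identity on $\A$ to be transported to the direct sum $\bigoplus_{a \in \A} G(a)$. One could equivalently present the proof by noting that $\mu_G$ and $\zeta_G$ are the images of $\mu$ and $\zeta$ under a natural representation of the incidence algebra of $\A$ on $\bigoplus_{a \in \A} G(a)$ defined by the transition maps of $G$, but the direct computation above is the shortest route.
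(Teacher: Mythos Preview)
Your proof is correct and follows essentially the same route as the paper: unfold both compositions, use functoriality to collapse $G^b_a\circ G^c_b$ into $G^c_a$, swap the order of summation, and invoke the defining identities $\sum_{b:\,c\leq b\leq a}\mu(b,c)=\delta_{a,c}$ and $\sum_{b:\,c\leq b\leq a}\mu(a,b)=\delta_{a,c}$. The only difference is presentational---you spell out the Kronecker delta and add the incidence-algebra interpretation as a remark, whereas the paper writes the computation more tersely.
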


\begin{proof}
Let $v\in \bigoplus_{a\in \A} G(a)$ and $a\in \A$,

\begin{equation}
\zeta_{G}\mu_G(v)(a)= \sum_{b:b\leq a}\sum_{c:c\leq b} \mu(b,c) G^b_aG^c_b(v_c)
\end{equation}

therefore,

\begin{equation}
\zeta_{G} \mu_{G}(v)(a)= \sum_{c\leq a}\left(\sum_{b: \ c\leq b\leq a} \mu(b,c)\right) G^c_a(v_c)= G^a_a(v_a)
\end{equation}

Furthermore,

\begin{equation}
\mu_{G}\zeta_{G}(v)(a)= \sum_{b\leq a} \mu(a,b)\sum_{c\leq b} G^c_a(v_c)= v_a
\end{equation}

Which ends the proof.

\end{proof}

\begin{rem}\label{chapitre-8:change-order-poset}
For any poset $(\A,\leq)$ one can oppose the order relation: for any $a,b\in \A$, 

\begin{equation}
a \leq_{op} b\iff b\leq a
\end{equation}

 $\leq_{op}$ is also denoted as $\geq$ and the corresponding poset, $(\A,\geq)$, as $\A^{op}$. One shows that, for any $a,b\in \A$ such that $a\geq b$,

\begin{align}
\zeta_{\A^{op}}(b,a)= \zeta_{\A}(a,b)\\
\mu_{\A^{op}}(b,a)= \mu_{\A}(a,b)
\end{align}

In particular for any functor $G: \A\to \Vect$ from a finite poset to the category of vector spaces,

\begin{equation}
\mu_{G^{*}}= (\mu_G)^{*}
\end{equation}

as for any $(l_a\in G(a)^{*},a\in \A)$,
\begin{equation}
\sum_{a\in \A} \sum_{b\leq a} \mu(a,b) l_aG^b_a= \sum_{b\in \A} \sum_{a\geq b} \mu(a,b) {G^{*}}^a_b(l_a)
\end{equation}
\end{rem}

\subsubsection{Regionalized loss}\label{Local-optimization-cost-function-subsection}

\begin{defn}[Definition of the Regionalized loss]\label{Local-optimization-cost-function}

Let $F:\A\to \Vect_f$ be a cofunctor over a finite poset, $\A$, to $(\R-)\Vect_f$, the category of finite dimensional ($\R$-)vector spaces, let $(f_a: F(a)\to \R, a\in \A)$ be a collection of functions that we will call local cost functions. We define the Regionalized Optimization problem with respect to $F$ and $(f_a: F(a)\to \R), a\in \A)$, to be the following optimization problem,

\begin{equation}\label{Local-optimization-cost-function-equation}
\sup_{x\in \lim F} \sum_{a\in \A} c(a) f_a(x_a)
\end{equation}

where for any $a\in \A$, 

\begin{equation}
c(a)= \sum_{b\geq a} \mu(b,a) 
\end{equation}

$f_R=\sum_{a\in \A} c(a) f_a$ is the Regionalized loss. 

\end{defn}


\begin{rem}
In the previous definition, we considered cofuntors over a given poset because in the applications it is this way that it appears; however, as explained in Remark \ref{chapitre-8:change-order-poset} we could also state the previous definition for functors $F:\A^{op}\to \Vect_f$ and in this case, 
\begin{equation}
c(a)= \sum_{b\leq_{op} a} \mu_{op}(a,b)
\end{equation}

\end{rem}

\begin{rem}
The constraint $x\in \lim F$ makes the Regionalized Optimization problem a constrained one. It is the simplest problem one can consider when wanting to follow the principles stated before (less redundant reconstruction of local problems); one could want to add more constraints to the Regionalized Optimization depending on the specificities of the problem at hand (for example linear ones) however some constraints are `more compatible' with $F$ than others; it is something that we will evoke, without going into details, when considering the application to noisy channel networks in Appendix \ref{appendix:first}.
\end{rem}

\subsubsection{Critical points of the Regionalized Optimization problem}\label{chapitre-8:local-optimization}

The following Theorem is the central result that motivates the choice of message passing algorithms we define for finding critical points of Regionalized Optimization problems.

\begin{thm}\label{thm-local-optimization}
An element $x\in \lim F$ is a critical point of the Regionalized Optimization problem with respect to a cofuntor $F$ from a finite poset $\A$ to $\Vect_f$ and a collection $f=(f_a,a\in \A)$ if and only if,

\begin{equation}\label{thm-local-optimization-critical-points}
\big[\mu_{F^*}d_xf\big]|_{\lim F}=0
\end{equation}

In the previous equation for any $a\in \A$ and $v\in \bigoplus_{a\in \A} F(a)$, $d_xf(v)= \sum_{a\in \A} d_x f_a(v_a)$ and $d_xf$ is implicitly identified to $\bigoplus_{a\in \A} d_{x_a}f_a$ as there is a natural identification $\big(\bigoplus_{a\in \A} F(a)\big)^* \cong \bigoplus_{a\in \A} F(a)^*$.
\end{thm}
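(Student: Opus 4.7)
The plan is to reduce the critical-point problem to a linear-algebra identity relating $df_R(x)$ and $\mu_{F^{*}}(d_xf)$, using the fact that $\lim F$ is a linear subspace. Since $F:\A\to \Vect_f$ is a cofunctor, $\lim F$ is cut out of $\bigoplus_{a\in\A} F(a)$ by the linear equations $F^a_b(v_a)=v_b$ for $b\leq a$; hence it is a linear subspace whose tangent space at every point is $\lim F$ itself. Thus $x\in\lim F$ is critical for $f_R\vert_{\lim F}$ if and only if $df_R(x)$ vanishes on every $v\in\lim F$. The strategy is then to show that, as linear forms on $\bigoplus_{a\in \A} F(a)$, the restrictions of $df_R(x)$ and of $\mu_{F^{*}}(d_xf)$ to $\lim F$ coincide.

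First I would compute $df_R(x)$ directly: since $f_R(y)=\sum_{a\in\A} c(a) f_a(y_a)$ separates into functions of single components, for any $v\in\bigoplus_{a\in \A} F(a)$ one has
\[
df_R(x)(v)=\sum_{a\in\A} c(a)\, d_{x_a}f_a(v_a),\qquad c(a)=\sum_{b\geq a}\mu(b,a).
\]
Next, unwinding the definition of $\mu_{F^{*}}$ via Remark \ref{chapitre-8:change-order-poset}, namely that $F^{*}$ is a (covariant) functor on $\A$ with arrows $(F^{*})^b_a=(F^a_b)^{*}:F(b)^{*}\to F(a)^{*}$, gives for any $v\in\bigoplus_{a\in\A} F(a)$
\[
\sum_{a\in\A}\mu_{F^{*}}(d_xf)(a)(v_a)=\sum_{a\in\A}\sum_{b\leq a}\mu(a,b)\,d_{x_b}f_b\bigl(F^a_b(v_a)\bigr).
\]

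The crucial step is to specialize to $v\in\lim F$: the compatibility $F^a_b(v_a)=v_b$ turns the right-hand side into $\sum_{a\in\A}\sum_{b\leq a}\mu(a,b)\,d_{x_b}f_b(v_b)$. Swapping the order of summation and recognizing $\sum_{a\geq b}\mu(a,b)=c(b)$ yields exactly $df_R(x)(v)$. Combining both displays, $df_R(x)$ and $\mu_{F^{*}}(d_xf)$ agree as linear forms once restricted to $\lim F$, so the critical-point condition $df_R(x)\vert_{\lim F}=0$ is equivalent to $\mu_{F^{*}}(d_xf)\vert_{\lim F}=0$, which is the claim. The only delicate point is the bookkeeping of the cofunctor/functor duality that turns $\sum_{b\leq a}\mu(a,b)(\cdot)$ on the covector side into the coefficients $c(b)=\sum_{a\geq b}\mu(a,b)$ on the vector side; once this Fubini-type exchange in the incidence algebra is set up, the rest is automatic.
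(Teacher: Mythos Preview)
Your proposal is correct and follows essentially the same route as the paper's proof: both use that $\lim F$ is a linear subspace (so criticality means $df_R(x)\vert_{\lim F}=0$), expand $c(a)=\sum_{b\geq a}\mu(b,a)$ to rewrite $df_R(x)(v)$ as a double sum, and then invoke the compatibility $F^a_b(v_a)=v_b$ on $\lim F$ to identify this with $\sum_a \mu_{F^{*}}(d_xf)(a)(v_a)$. Your write-up is in fact a bit more explicit than the paper's about the tangent-space identification and the equivalence in both directions, but the argument is the same.
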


\begin{proof}

See Appendix \ref{appendix:second}.\\

\end{proof}

We will now show how we can parameterize linear forms that cancel over $\lim F$ so that we can reexpress Equation \ref{thm-local-optimization-critical-points} in a way that will allow for defining message passing algorithms. The following exact sequence holds,

\begin{equation}\label{exact-sequence}
0\rightarrow \lim F\rightarrow \bigoplus_{a\in \A}F(a)\overset{\delta_F}{\rightarrow} \bigoplus_{\substack{a,b\in \A\\ a\geq b}} F(b)
\end{equation}

where for any $v\in  \underset{\substack{a,b\in \A\\ a\geq b}}{\bigoplus} F(b)$ and $a,b\in \A$ such that $b\leq a$, 

\begin{equation}
\delta_F(v)(a,b)= F^a_b(v_a)- v_b
\end{equation}

Let us drop the $F$ in $\delta_F$ and write $\delta$ instead.
Saying that sequence in Equation $\ref{exact-sequence}$ is an exact sequence is the same than saying that,

\begin{equation}
\ker \delta= \lim F
\end{equation}

In orther words, $\delta$ gives an implicit definition of $\lim F$ as it is the set where $\delta$ vanish. In $\Vect$ it is a fact that the dual of the previous exact sequence is also exact:

\begin{equation}
0\leftarrow (\lim F)^{*} \leftarrow \bigoplus_{a\in \A}F(a)^{*}\overset{\dd_F}{\leftarrow} \bigoplus_{\substack{a,b\in \A\\ a\geq b}} F(b)^{*}
\end{equation}

with $\dd =\delta ^{*}$. This means that, 

\begin{equation}
\bigoplus_{a\in \A}F(a)^{*}/\im \dd = (\lim F)^*
\end{equation}

Therefore $\lim F\cong \colim F^\star$. The explicit definition of $\dd$ is, for any $l_{a\rightarrow b}\in \underset{{\substack{a,b\in \A\\ a\geq b}}}{\bigoplus} F(b)^{*}$ and $a\in \A$,

\begin{equation}
\dd l(a)= \sum_{a\geq b} {F^a_b}^{*}(l_{a\rightarrow b}) -\sum_{b\geq a} l_{b\rightarrow a}
\end{equation} 

So in particular on can restate Equation \ref{thm-local-optimization-critical-points} as,

\begin{equation}
\mu_F^{*}d_xf\in \im \dd
\end{equation}

which can be rewritten as the fact that there is $(l_{a\to b}\in F(b)^{*} \vert a,b\in \A \text{ s. t. } b\leq a)$ such that, 

\begin{equation}\label{expression-standard-critical-points}
d_xf= \zeta_{F^{*}} \dd l
\end{equation}

\subsubsection{Message passing algorithms}\label{message-passing-regionalized}

We will now show how we can derive a collection of message passing algorithms from the expression of the critical points Equation \ref{expression-standard-critical-points} when one can inverse the following relations, 

\begin{equation}
\forall a\in \A \forall x_a\in F(a), y_a\in F(a)^* \quad d_{x_a}f_a = y_a
\end{equation}

as it is the case in the examples we crossed paths with (see Appendix \ref{appendix:first} for one example). Therefore we will assume that there is $g$ such that for all $x\in \bigoplus_{a\in \A} F(a)$ and $y\in \bigoplus_{a\in \A} F(a)^*$,



\begin{equation}\label{critical-points-with-mesasges}
d_xf = y \iff x= g(y)
\end{equation}

In particular one can show that this implies that for any $a\in \A$ and $ x\in \bigoplus_{a\in A}F(a)$, 

\begin{equation}
 g_ad_{x_a}f_a= x_a
\end{equation}

The relation that relates Lagrange multipliers $l$ and the constraint is then given by composing by the functions that defines the constraints, which is $\delta_F$. The relation is explicitly given by $\delta_F g\zeta_{F^*}\dd_F$  and it sends a Lagrange multiplier $l\in \bigoplus_{\substack{a,b\in \A\\ a\geq b}}F(b)^*$ to a constraint $c\in  \bigoplus_{\substack{a,b\in \A\\ a\geq b}}F(b)$ defined as, for $a,b\in \A$ such that $b\leq a$,
\begin{equation}
c(a,b)= \delta_F g\zeta_{F^*}\dd_F l (a,b)=F^a_bg_a(\zeta_{F^*}\dd_Fl(a))-g_b(\zeta_{F^*}\dd_F l(b)))
\end{equation}


The particular constraint we are interested in is $c=0$, i.e. 

\begin{equation}\label{relation-constraint-messages}
\delta_F g\zeta_{F^*}\dd_F l=0
\end{equation}

as in implies that $g\zeta_{F^*}\dd_F l\in \lim F$: if we can find a Lagrange multiplier $l\in \bigoplus_{a\in \A}F(a)^* $ such that $\delta_F g\zeta_{F^*}\dd_F l=0$ we can find a critical point of the global optimization problem.\\

We will now show that any algorithm, $dl_{t+1}=h(dl_t)$, such that Equation \ref{relation-constraint-messages} is satisfied for its fix points defines a critical point.

\begin{thm}\label{message-passage-regionalized-optimization}
Let us consider $F$ a cofuntor from a finite poset $\A$ to $\Vect_f$ and a collection of local cost functions $(f_a:F(a)\to  \R,a\in \A)$. Let $h: \bigoplus_{a\in \A}F(a)^*\to \bigoplus_{a\in \A} F(a)^*$ be a function; one can build an algorithm from $h$ by considering the iterated sequence 

\begin{equation}
\forall t\in \N \quad l_{t+1}= h(l_{t})
\end{equation}

with $l_0\in \bigoplus_{a\in \A} F(a)^*$ being an initial condition.\\

Assume that any fix point of $h$, $l^*\in \bigoplus F(a)^*$ (which is a fix point of the algorithm), is such that

\begin{equation}
\delta_F g\zeta_{F^*}\dd_F l^*=0
\end{equation}

 Then, 

\begin{equation}
x_*=g(\zeta_{F^*}\dd_F l^*)
\end{equation}

is a critical point of the associated Regionalized Optimization problem associated to $F$ and $(f_a,a\in \A)$.

Let us recall that a fix point, $l^*$, of $h$ is by definition an element such that,

\begin{equation}
l^*=h(l^*)
\end{equation}

\end{thm}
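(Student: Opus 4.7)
The plan is to chain the hypothesis with the critical-point characterization of Theorem~\ref{thm-local-optimization}. Fix a fix point $l^*$ of $h$ and set $x_* := g(\zeta_{F^*}\dd_F l^*)$. I need to verify two things: first, that $x_* \in \lim F$ (feasibility), and second, that $x_*$ satisfies the critical-point equation~\ref{thm-local-optimization-critical-points}.

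For feasibility, the hypothesis on fix points gives $\delta_F\, g\, \zeta_{F^*}\dd_F l^* = 0$, which is exactly $\delta_F(x_*) = 0$. Since $\ker \delta_F = \lim F$ by the exact sequence in Equation~\ref{exact-sequence}, this forces $x_* \in \lim F$.

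For the critical-point equation, I would apply the inversion relation in Equation~\ref{critical-points-with-mesasges}: setting $y := \zeta_{F^*}\dd_F l^*$ and noting $x_* = g(y)$, the equivalence $d_x f = y \iff x = g(y)$ yields $d_{x_*} f = \zeta_{F^*}\dd_F l^*$. Applying $\mu_{F^*}$, the inverse of $\zeta_{F^*}$, gives $\mu_{F^*} d_{x_*} f = \dd_F l^* \in \im \dd_F$. By the dual exact sequence obtained after Equation~\ref{exact-sequence}, $\im \dd_F$ is exactly the annihilator of $\lim F$ inside $\bigoplus_{a\in \A}F(a)^{*}$, so $[\mu_{F^*}d_{x_*} f]|_{\lim F} = 0$. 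Theorem~\ref{thm-local-optimization} then concludes that $x_*$ is a critical point, with $l^*$ playing the role of the Lagrange multiplier of Equation~\ref{expression-standard-critical-points}.

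The argument is essentially a repackaging of Theorem~\ref{thm-local-optimization}: all the real content sits in the two earlier results (the exactness of~\ref{exact-sequence} and its dual, plus the invertibility of $\zeta_{F^*}$), so there is no serious obstacle. The only point that deserves attention is recognising that the condition $\delta_F g \zeta_{F^*} \dd_F l^* = 0$ simultaneously plays two roles: it ensures $x_*$ is feasible (lies in $\lim F$) and it is precisely what a message-passing scheme must enforce at a fix point to witness a critical point; once this dual reading is made explicit, the chain of implications is immediate.
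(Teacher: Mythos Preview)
Your argument is correct and follows essentially the same route as the paper's proof: define $x_* = g(\zeta_{F^*}\dd_F l^*)$, use the fix-point hypothesis to get $\delta_F x_* = 0$ (hence $x_* \in \lim F$), and use the inversion relation~\ref{critical-points-with-mesasges} to obtain $d_{x_*} f = \zeta_{F^*}\dd_F l^*$. The only cosmetic difference is that the paper appeals directly to the reformulated criterion in Equation~\ref{expression-standard-critical-points}, whereas you unwind one step further back to Theorem~\ref{thm-local-optimization} via $\mu_{F^*}$ and the dual exact sequence; both are equivalent by the discussion preceding Section~\ref{message-passing-regionalized}.
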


\begin{proof}

Let $x^*=g(\zeta_{F^*}\dd l^*)$ then,

\begin{equation}
d_{x^*}f=\zeta_{F^*}\dd_F l^*
\end{equation}

and for any $a,b\in \A$ such that $b\leq a$,

\begin{equation}
F^a_b(x^*_a)= x^*_b
\end{equation}

Therefore $x^*$ is a critical point of the Regionalized Optimization problem associated to $F$ and $(f_a,a\in \A)$. Which ends the proof.\\
\end{proof}

In Theorem \ref{message-passage-regionalized-optimization}, any algorithm that enables to find a set of Lagrange multiplier such that the related constraints, $c=\delta_F g\zeta_{F^*}\dd_F l$, are the constraints of the problem we are interested in, here $c=0$, is a good choice of algorithm. For example one could consider a Newton method for finding the roots of $\delta_F g\zeta_{F^*}\dd_F$, this algorithm satisfies the conditions of Theorem \ref{message-passage-regionalized-optimization}. 

Other than a Newton method one can consider the following `generic' class of message passing algorithms: for any $F$ and collection of functions $(f_a,a\in \A)$, let us define the following algorithm,

\begin{equation}
l(t+1)- l(t):= \delta_F g\zeta_{F^*}\dd_F l(t)
\end{equation}

Fix points of this algorithm, $l^*$, satisfy $\delta_F g\zeta_{F^*}\dd_F l^*=0$. We also write $l(t)$ as $l_t$. More explicitly this algorithm, associated to $F$ and $(f_a,a\in \A)$, is defined by, for any $a,b\in \A$ such that $b\leq a$,

\begin{equation}
l_{a\to b}(t+1)= l_{a\to b} (t) +F^a_bg_a(\zeta_{F^*}\dd_F l_t (a))-g_b(\zeta_{F^*}\dd_F l_t(b))) 
\end{equation}

\begin{rem}
In particular for the region based free energy approximation the associated `generic' class of message passing algorithms is the Generalized Belief Propagation. One can convince oneself of this result by using sums instead of products:

\begin{equation}
 \ln b_a= H_a+\sum_{b\leq a} \sum_{\substack{c: b\leq c\\ c\nleq a}} \ln m_{c\to b}
 \end{equation}
 
and remarking that,
 
\begin{equation}
\zeta \dd \ln m (a)= \sum_{b\leq a} \sum_{\substack{c: b\leq c\\ c\nleq a}} \ln m_{c\to b}
\end{equation}

The previous equation is similar to the Gauss formula exhibited in Peltre's PhD thesis Proposition 2.3 (Gauss Formula) \cite{Peltre}.\\
\end{rem}

\section{Conclusion}

In this article we answered the question: how to build the less redundant optimization problem over a global reconstruction of data from local/partial data, given a collection of losses defined on the partial data. We showed how this problem is specially relevant for multi-sensor information integration. We gave canonical message passing algorithms to solve this optimization problem and gave an example of application of the framework we propose.

We are working on implementing this framework to several applications; an other application, that we did not present in this paper, is a PCA adapted to filtered data, for example time series where data is expected to in an ordered fashion: first $X_1$ then $X_2$ then $X_3$... An other direction we are considering is to extend the previous results by replacing posets with categories in general; results from \cite{Leinster} where an inclusion-exclusion formula is defined for some categories could be very helpful. One other extension, that would have many applications for adaptive systems, would be not to assume that the functor is given a priori but that it is also a variable on which to optimize.

\section{Acknowledgments}
The author would like to thank Daniel Bennequin, Juan Pablo Vigneaux for our numerous discussions and most particularly Olivier Peltre. He would like to thank warmly an anonymous reviewer for suggesting how to improve the presentation of the results of this paper. 


\bibliographystyle{eptcs}
\bibliography{bibliography}

\begin{thebibliography}{10}
\providecommand{\bibitemdeclare}[2]{}
\providecommand{\surnamestart}{}
\providecommand{\surnameend}{}
\providecommand{\urlprefix}{Available at }
\providecommand{\url}[1]{\texttt{#1}}
\providecommand{\href}[2]{\texttt{#2}}
\providecommand{\urlalt}[2]{\href{#1}{#2}}
\providecommand{\doi}[1]{doi:\urlalt{http://dx.doi.org/#1}{#1}}
\providecommand{\eprint}[1]{arXiv:\urlalt{https://arxiv.org/abs/#1}{#1}}
\providecommand{\bibinfo}[2]{#2}

\bibitemdeclare{article}{DaCosta2020}
\bibitem{DaCosta2020}
\bibinfo{author}{Lancelot \surnamestart {Da Costa}\surnameend},
  \bibinfo{author}{Thomas \surnamestart Parr\surnameend}, \bibinfo{author}{Noor
  \surnamestart Sajid\surnameend}, \bibinfo{author}{Sebastijan \surnamestart
  Veselic\surnameend}, \bibinfo{author}{Victorita \surnamestart
  Neacsu\surnameend} \& \bibinfo{author}{Karl \surnamestart Friston\surnameend}
  (\bibinfo{year}{2020}): \emph{\bibinfo{title}{{Active inference on discrete
  state-spaces: A synthesis}}}.
\newblock {\sl \bibinfo{journal}{Journal of Mathematical Psychology}}
  \bibinfo{volume}{99}, p. \bibinfo{pages}{102447},
  \doi{10.1016/j.jmp.2020.102447}.
\newblock \eprint{2001.07203}.

\bibitemdeclare{article}{DEMARTINO2018e00596}
\bibitem{DEMARTINO2018e00596}
\bibinfo{author}{Andrea \surnamestart {De Martino}\surnameend} \&
  \bibinfo{author}{Daniele \surnamestart {De Martino}\surnameend}
  (\bibinfo{year}{2018}): \emph{\bibinfo{title}{An introduction to the maximum
  entropy approach and its application to inference problems in biology}}.
\newblock {\sl \bibinfo{journal}{Heliyon}}
  \bibinfo{volume}{4}(\bibinfo{number}{4}), p. \bibinfo{pages}{e00596},
  \doi{https://doi.org/10.1016/j.heliyon.2018.e00596}.
\newblock
  \urlprefix\url{https://www.sciencedirect.com/science/article/pii/S2405844018301695}.

\bibitemdeclare{article}{FRISTON200670}
\bibitem{FRISTON200670}
\bibinfo{author}{Karl \surnamestart Friston\surnameend}, \bibinfo{author}{James
  \surnamestart Kilner\surnameend} \& \bibinfo{author}{Lee \surnamestart
  Harrison\surnameend} (\bibinfo{year}{2006}): \emph{\bibinfo{title}{A free
  energy principle for the brain}}.
\newblock {\sl \bibinfo{journal}{Journal of Physiology-Paris}}
  \bibinfo{volume}{100}(\bibinfo{number}{1}), pp. \bibinfo{pages}{70--87},
  \doi{https://doi.org/10.1016/j.jphysparis.2006.10.001}.
\newblock
  \urlprefix\url{https://www.sciencedirect.com/science/article/pii/S092842570600060X}.
\newblock \bibinfo{note}{Theoretical and Computational Neuroscience:
  Understanding Brain Functions}.

\bibitemdeclare{article}{s20123418}
\bibitem{s20123418}
\bibinfo{author}{Cliff~A. \surnamestart Joslyn\surnameend},
  \bibinfo{author}{Lauren \surnamestart Charles\surnameend},
  \bibinfo{author}{Chris \surnamestart DePerno\surnameend},
  \bibinfo{author}{Nicholas \surnamestart Gould\surnameend},
  \bibinfo{author}{Kathleen \surnamestart Nowak\surnameend},
  \bibinfo{author}{Brenda \surnamestart Praggastis\surnameend},
  \bibinfo{author}{Emilie \surnamestart Purvine\surnameend},
  \bibinfo{author}{Michael \surnamestart Robinson\surnameend},
  \bibinfo{author}{Jennifer \surnamestart Strules\surnameend} \&
  \bibinfo{author}{Paul \surnamestart Whitney\surnameend}
  (\bibinfo{year}{2020}): \emph{\bibinfo{title}{A Sheaf Theoretical Approach to
  Uncertainty Quantification of Heterogeneous Geolocation Information}}.
\newblock {\sl \bibinfo{journal}{Sensors}}
  \bibinfo{volume}{20}(\bibinfo{number}{12}), \doi{10.3390/s20123418}.
\newblock \urlprefix\url{https://www.mdpi.com/1424-8220/20/12/3418}.

\bibitemdeclare{inbook}{Kesavan2009}
\bibitem{Kesavan2009}
\bibinfo{author}{H.~K. \surnamestart Kesavan\surnameend}
  (\bibinfo{year}{2009}): \emph{\bibinfo{title}{Jaynes' maximum entropy
  principle}}, pp. \bibinfo{pages}{1779--1782}.
\newblock \bibinfo{publisher}{Springer US}, \bibinfo{address}{Boston, MA},
  \doi{10.1007/978-0-387-74759-0_312}.
\newblock \urlprefix\url{https://doi.org/10.1007/978-0-387-74759-0_312}.

\bibitemdeclare{misc}{Leinster}
\bibitem{Leinster}
\bibinfo{author}{Tom \surnamestart Leinster\surnameend} (\bibinfo{year}{2012}):
  \emph{\bibinfo{title}{Notions of Möbius inversion}}.
\newblock \bibinfo{howpublished}{arXiv:1201.0413}.

\bibitemdeclare{incollection}{LEVCHUK201967}
\bibitem{LEVCHUK201967}
\bibinfo{author}{Georgiy \surnamestart Levchuk\surnameend},
  \bibinfo{author}{Krishna \surnamestart Pattipati\surnameend},
  \bibinfo{author}{Daniel \surnamestart Serfaty\surnameend},
  \bibinfo{author}{Adam \surnamestart Fouse\surnameend} \&
  \bibinfo{author}{Robert \surnamestart McCormack\surnameend}
  (\bibinfo{year}{2019}): \emph{\bibinfo{title}{Chapter 4 - Active Inference in
  Multiagent Systems: Context-Driven Collaboration and Decentralized
  Purpose-Driven Team Adaptation}}.
\newblock In \bibinfo{editor}{William \surnamestart Lawless\surnameend},
  \bibinfo{editor}{Ranjeev \surnamestart Mittu\surnameend},
  \bibinfo{editor}{Donald \surnamestart Sofge\surnameend},
  \bibinfo{editor}{Ira~S. \surnamestart Moskowitz\surnameend} \&
  \bibinfo{editor}{Stephen \surnamestart Russell\surnameend}, editors: {\sl
  \bibinfo{booktitle}{Artificial Intelligence for the Internet of Everything}},
  \bibinfo{publisher}{Academic Press}, pp. \bibinfo{pages}{67--85},
  \doi{https://doi.org/10.1016/B978-0-12-817636-8.00004-1}.
\newblock
  \urlprefix\url{https://www.sciencedirect.com/science/article/pii/B9780128176368000041}.

\bibitemdeclare{book}{MacL}
\bibitem{MacL}
\bibinfo{author}{Saunders \surnamestart Mac~Lane\surnameend}
  (\bibinfo{year}{1978}): \emph{\bibinfo{title}{Categories for the Working
  Mathematician}}.
\newblock \bibinfo{publisher}{Springer},
  \doi{http://dx.doi.org/10.1007/978-1-4757-4721-8}.

\bibitemdeclare{article}{doi:10.1143/JPSJ.12.753}
\bibitem{doi:10.1143/JPSJ.12.753}
\bibinfo{author}{Tohru \surnamestart Morita\surnameend} (\bibinfo{year}{1957}):
  \emph{\bibinfo{title}{Cluster Variation Method of Cooperative Phenomena and
  its Generalization I}}.
\newblock {\sl \bibinfo{journal}{Journal of the Physical Society of Japan}}
  \bibinfo{volume}{12}(\bibinfo{number}{7}), pp. \bibinfo{pages}{753--755},
  \doi{10.1143/JPSJ.12.753}.
\newblock \eprint{https://doi.org/10.1143/JPSJ.12.753}.

\bibitemdeclare{misc}{Peltre}
\bibitem{Peltre}
\bibinfo{author}{Olivier \surnamestart Peltre\surnameend}
  (\bibinfo{year}{2020}): \emph{\bibinfo{title}{{Homology of Message-Passing
  Algorithms}}}.
\newblock \bibinfo{howpublished}{\url{http://opeltre.github.io}}.
\newblock \bibinfo{note}{Ph.D. thesis (preprint)}.

\bibitemdeclare{article}{ROBINSON2017208}
\bibitem{ROBINSON2017208}
\bibinfo{author}{Michael \surnamestart Robinson\surnameend}
  (\bibinfo{year}{2017}): \emph{\bibinfo{title}{Sheaves are the canonical data
  structure for sensor integration}}.
\newblock {\sl \bibinfo{journal}{Information Fusion}} \bibinfo{volume}{36}, pp.
  \bibinfo{pages}{208--224},
  \doi{https://doi.org/10.1016/j.inffus.2016.12.002}.
\newblock
  \urlprefix\url{https://www.sciencedirect.com/science/article/pii/S156625351630207X}.

\bibitemdeclare{article}{Rota}
\bibitem{Rota}
\bibinfo{author}{Gian-Carlo \surnamestart Rota\surnameend}
  (\bibinfo{year}{1964}): \emph{\bibinfo{title}{On the foundations of
  combinatorial theory {I}. {Theory} of {M}{\"o}bius functions}}.
\newblock {\sl \bibinfo{journal}{Probability theory and related fields}}
  \bibinfo{volume}{2}(\bibinfo{number}{4}), pp. \bibinfo{pages}{340--368}.

\bibitemdeclare{article}{ijms222111868}
\bibitem{ijms222111868}
\bibinfo{author}{Youri \surnamestart Timsit\surnameend} \&
  \bibinfo{author}{Grégoire \surnamestart Sergeant-Perthuis\surnameend}
  (\bibinfo{year}{2021}): \emph{\bibinfo{title}{Towards the Idea of Molecular
  Brains}}.
\newblock {\sl \bibinfo{journal}{International Journal of Molecular Sciences}}
  \bibinfo{volume}{22}(\bibinfo{number}{21}), \doi{10.3390/ijms222111868}.
\newblock \urlprefix\url{https://www.mdpi.com/1422-0067/22/21/11868}.

\bibitemdeclare{article}{Yedidia}
\bibitem{Yedidia}
\bibinfo{author}{Jonathan~S \surnamestart Yedidia\surnameend},
  \bibinfo{author}{William~T \surnamestart Freeman\surnameend} \&
  \bibinfo{author}{Yair \surnamestart Weiss\surnameend} (\bibinfo{year}{2005}):
  \emph{\bibinfo{title}{Constructing free-energy approximations and generalized
  belief propagation algorithms}}.
\newblock {\sl \bibinfo{journal}{IEEE Transactions on information theory}}
  \bibinfo{volume}{51}(\bibinfo{number}{7}), pp. \bibinfo{pages}{2282--2312},
  \doi{http://dx.doi.org/10.1109/TIT.2005.850085}.

\end{thebibliography}

\newpage

\appendix

\section{New algorithm: Regionalized Optimisation for noisy channel networks}\label{appendix:first}

\subsection{Region based free energy approximation as a global optimization problem}

The region based free energy approximation minimization is the Regionalized Optimisation version of the maximum of entropy under energy constraint for the presheaf given by marginalizations. Let $I$ be a finite set and let $E=\prod_{i\in I} E_i$ be a product of finite sets. Let us denote $\mes$ the category that has as object measurable spaces and as morphisms measurable applications.  Let $\A\subseteq \Pa(I)$ be a subposet of the powerset of $I$ and for any $a\in \A$ let $F(a)= \p_{>0}(E_a)$ the set of strictly positive probability densities over the finite set $E_a$. For any $a,b\in \A$ such that $b\subseteq a$, we shall note the marginalization, ${\pi^a_b}_{*}$ restricted to $\p_{>0}(E_a)$, that sends $\p_{>0}(E_a) \to \p_{>0}(E_b)$, as $F^a_b$. $F$ is a cofunctor from $\A$ to $\mes$. In order to apply the results of this article, we extend it to a cofuntor $\tilde{F}$ from $\A$ to $\Vect$ by defining for any $a\in \A$, $\tilde{F}(a)= \R^{E_a}$ and for $b\in \A$ such that $b\leq a$, any $q\in F(a)$ and any $x_b\in E_b$,

\begin{equation}
\tilde{F}^a_bq(x_b)= \sum_{\substack{y\in E_{a}:\\ y_b=x_b}} q(y)
\end{equation}

Let us note $1_a\in \R^{E_a}$ the constant function that equals to $1$ and let us denote $\langle 1_a\vert$ the associated linear form for the canonical scalar product on $\R^{E_a}$. The fact that $F$ is a subobject of $\tilde{F}$ can be restated as saying that for any $a,b\in \A$ such that $b\leq a$, $\langle 1_b\vert \tilde{F}^a_b =  \langle 1_a\vert$. In other words the constraint that $p\in F(a)$ is compatible with $\tilde{F}$.

Let $(H_a\in \R^{E_a},a\in \A)$ be a collection of Hamiltonians; the region based free energy approximation corresponds to the Regionalized loss with respect to the cofunctor $F$ and local losses $(f_a:q_a\to \sum q_a H_a - S_a(q_a),a\in \A)$, defined as for any $q\in \bigoplus_{a\in \A} \R_{>0}^{E_a}$,

\begin{equation}\label{Bethe-free-energy}
F_{GBP}(q)=f_R(q)= \sum_{a\in \A}c(a) \left(\sum q_a H_a - S_a(q_a)\right)
\end{equation}

The associated Regionalized optimization problem is, 

\begin{equation}\label{Bethe-free-energy-optimization}
\inf_{p\in \lim F} f_R(p)
\end{equation}

which can be rewritten as,

\begin{equation}\label{appendix:regionalized-expression}
\inf_{\substack{q\in \lim \tilde{F}\\ \forall a\in \A q_a> 0 \\ \underset{x_a\in E_a}{\sum} q_a(x_a)=1}} \sum_{a\in \A}c(a) \left(\sum_{x_a\in E_a}q_a(x_a) H_a(x_a) + \sum_{x_a\in E_a} q_a(x_a) \ln q_a(x_a) \right)
\end{equation}

\subsection{Global Optimisation and Belief propagation for noisy channel networks}\label{Bethe-Kern}

We denote $\Kern$ the category of probability kernels that has as object measurable spaces and as morphisms between two measurable spaces $E$ and $E_1$ probability kernels, i.e measurable application from $E$ to $\mathbb{P}(E_1)$. We denote a probability kernel $k: E\to E_1$, between two finite sets, as $(k(\omega_1\vert\omega), \omega_1\in E_1, \omega\in E_1)$.
The Regionalized optimization (Definition \ref{Local-optimization-cost-function}) of the entropy with energy constraints is in fact more general than the region based approximation of free energy: simply changing the category in which the functor $F$ takes values from $\mes$ to $\Kern$ allows for a new class of algorithms where marginalization are replaced by noisy channels. This might seem a rewriting trick but it is not; it is true and evident only now that we have developed the whole theoretical framework. We believe it is a notable improvement with respect what exists in the literature as we have not found any equivalent so far. \\
What changes with respect to the previous subsection is the choice of the cofunctor; let us now clarify how it is built. Let us consider cofunctors $F$ from $\A$ to $\Kern$ that take values in finite sets, i.e. for any $a\in \A$, $F(a)$ is a finite set and such that for any $a,b\in \A$ such that $b\leq a$ and any $\omega\in F(a), \omega_1\in F(b)$,

\begin{eqnarray}
F^a_b(\omega_1\vert \omega) >0  \nonumber\\
\sum_{\omega_2\in F(b)} F^a_b(\omega_2\vert \omega)=1
\end{eqnarray}


Any cofunctor, $F$, from a poset $\A$ to the category of probability kernels is a noisy channel network, where the channels are the probability kernels from $F(a)$ to $F(b)$ for every couple of elements of the network $a,b$ that are related ($b\leq a$).

Let us define for $a,b\in \A$ such that $b\leq a$, ${F_*}^a_b: \p_{>0}\left(F(a)\right)  \to  \p_{>0}\left(F(b)\right)$ defined for any $p\in \mathbb{P}\left(F(a)\right)$ as,
\begin{equation}
{F_*}^a_b(p):= F^a_b \circ p = \sum_{\omega \in F(a)} F^a_b(.\vert \omega)p(\omega)
\end{equation}

${F_*}^a_b$ can be extended to a cofuntor from $\A$ to $\Vect_f$ as follows,

\begin{equation}
\begin{array}{ccccc}
\tilde{F}^a_b& : &\R^{F(a)} & \to & \R^{F(b)}\\
& & f & \mapsto &\left(\sum_{\omega\in F(a)}f(\omega)F^a_b(\omega_1\vert \omega), \omega_1\in F(b)\right)\\
\end{array}
\end{equation}

For any $a,b\in \A$ such that $a\leq b$, $\langle 1_b\vert \tilde{F}^a_b =  \langle 1_a\vert$ as $F^a_b$ is a probability kernel, therefore $F_*$ is a subobject of $\tilde{F}$. In other words the constraint that $p\in \p_{>0}(F(a))$ is compatible with $\tilde{F}$.\\

\begin{rem}
 Let us remark that the cost functions are not defined on all $\R^{F(a)}$ but only on $\R_{>0}^{F(a)}$, however $\prod_{a\in \A} \R_{>0}^{F(a)}\cap \lim \tilde{F}$ is an open subset of $\tilde{F}$ therefore the characterization of the critical points stated in Section \ref{chapitre-8:local-optimization}, Theorem \ref{thm-local-optimization}, Theorem \ref{message-passage-regionalized-optimization} still holds.\\
\end{rem} 

The Regionalized Optimization associated to $(f_a:q_a\to \sum q_a H_a - S_a(q_a),a\in \A)$ and $\tilde{F}$ is therefore well defined and has the same expression as Equation \ref{appendix:regionalized-expression}.
 
To conclude let us explicit the message passing algorithm associated to the Regionalized Optimisation of free energy on noisy channel networks. To do so let us remark that the dual of $F_*$, is in fact the functor of conditional expectations: let $a,b\in \A$ be such that $b\leq a$, for any $f\in \R^{F(b)}$ and $\omega\in F(a)$

\begin{equation}
{F^a_b}^*(f)(\omega)=  \sum_{\omega_1\in F(b)} f(\omega_1) F^a_b(\omega_1\vert \omega)
\end{equation}

The canonical message passing algorithm associated to this Regionalized Optimization problem can be reexpressed in a way that is more familiar to people are ued with the General Belief Propagation (GBP). For two elements of $\A$, $a$ and $b$ such that $b\leq a$, we consider two types of messages: top-down messages $m_{a\to b}\in \R_{>0}^{F(b)}$ and bottom-up messages $n_{b\to a}\in \R_{>0}^{F(a)}$. The update rules are similar to the ones for the GBP algorithm (Equations \ref{GBP1},\ref{GBP2}); consider messages at times $t$, $(n_{b\to a}(t), m_{a\to b}(t) \vert b,a\in \A \text{ s. t. } b\leq a)$, they are related by the following relation,

\begin{align}
\forall a,b\in \A, \text{s.t. }  b\leq a, \quad n_{b \to a}(t)&= \prod_{\substack{c: b\leq c\\ c\nleq a}}{F^a_b}^* m_{c\to b} (t)\\
\forall \omega_1\in F(a)\quad n_{b\to a}(\omega_1)=& \prod_{\substack{c: b\leq c\\ c\nleq a}} \sum_{\omega\in F(b)} m_{c\to b}(\omega) F^a_b(\omega\vert \omega_1)
\end{align}

One then defines beliefs as ,

\begin{equation}
\forall a \in \A, \quad b_a(t)= e^{-H_a}\prod_{\substack{b\in \A:\\ b\leq a}} n_{b\to a}(t)
\end{equation}

 The update rule is,

\begin{align}
\forall a,b\in \A, \text{s.t.} b\leq a\quad m_{a\to b}(t+1)&= m_{a\to b}(t)\frac{F^a_b b_a(t)}{b_b(t)}\\
\forall \omega \in F(b) \quad m_{a\to b}(\omega,t+1)&= m_{a\to b}(\omega,t)\frac{\sum_{\omega_1\in F(a)}  b_a(\omega_1,t)F^a_b(\omega\vert \omega_1)}{b_b(\omega,t)}
\end{align}

A consequence of Section \ref{message-passing-regionalized} is that the fix points of this message passing algorithm are critical points of the Regionalized (free energy) loss. Once again it is important to insist that these Regionalized free energies and their associated message passing algorithms have a much greater reach for applications than the region based approximation of free energy and Generalized Belief Propagation as:
\begin{enumerate}
    \item the $F(a)$ can be any measurable spaces, it is not necessarily a product of spaces with indices in a subsets $a\subseteq I$ for a collection of subsets of $I$ ($F(a)\neq \prod_{i\in a} E_i$); the applications $F^a_b$ can be any measurable map, not just projections,
\item the morphisms $F^a_b: F(a)\to F(b)$ can be more general than measurable functions, as they can be probability kernels that allow uncertainty on the output of an element of $F(a)$.
\end{enumerate}

\section{Proof of Theorem 2.1}\label{appendix:second}

Let us note $c(u)=\sum_{a\in \A} c(a) f_a(u_a)$ for $u\in \lim F$, then any critical point $x\in \lim F$ is defined by,

\begin{equation}
d_xc|_{\lim F}=0
\end{equation}

Then for any $u\in \lim F$, 

\begin{equation}
\sum_{a\in \A}\sum_{b\leq a} \mu(a,b) d_xf_b(u_b)=0
\end{equation}

Furthermore for any $b\leq a$, $F^a_b(u_a)= u_b$ therefore,

\begin{equation}
\sum_{a\in \A}\sum_{b\leq a} \mu(a,b) d_xf_bF^a_b(u_a)=0
\end{equation}

in other words,

\begin{equation}
\sum_{a\in \A}\mu_F^{*}(d_xf)(u_a)=\sum_{a\in \A} \mu_{F^*}(d_xf)(u_a)=0
\end{equation}

which can be restated as,
\begin{equation}
\mu_{F^*}d_xf|_{\lim F}=0
\end{equation}

Which ends the proof.\\

\end{document}